\tikzset{fontscale/.style = {font=\relsize{#1}}}
\newcommand{\alg}[1]{{\rmfamily \textsc{#1}}}
\DeclareMathOperator*{\argmax}{arg\,max}
\newtheorem{definition}{Definition}
\newtheorem{theorem}{Theorem}
\newtheorem{lemma}{Lemma}
\crefname{definition}{Def.}{Defs.}
\Crefname{definition}{Definition}{Definitions}
\title{Lifting DecPOMDPs for Nanoscale Systems --- A Work in Progress}
\author {
    Tanya Braun\textsuperscript{\rm 1},
    Stefan Fischer\textsuperscript{\rm 2},
    Florian Lau\textsuperscript{\rm 2},
    Ralf Möller\textsuperscript{\rm 1} \\
}
\begin{document}

\maketitle


\begin{abstract}
DNA-based nanonetworks have a wide range of promising use cases, especially in the field of medicine.
With a large set of agents, a partially observable stochastic environment, and noisy observations, such nanoscale systems can be modelled as a decentralised, partially observable, Markov decision process (DecPOMDP).
As the agent set is a dominating factor, this paper presents 
\begin{inparaenum}[(i)]
	\item lifted DecPOMDPs, partitioning the agent set into sets of indistinguishable agents, reducing the worst-case space required, and 
	\item a nanoscale medical system as an application.
\end{inparaenum}
Future work turns to solving and implementing lifted DecPOMDPs.
\end{abstract}


\noindent Particularly in times of medical crisis, precise and efficient diagnostic tools are invaluable.
The recent development of DNA-based nanonetworks shows promising results as a fast and robust diagnostics tool \cite{Lau21DNA} and a method for treating diseases in the human body \cite{LAU2019}.
A swarm of thousands of nanodevices collaborating to compute a diagnosis or treat a disease realise such tasks.
However, a network's environment is extremely heterogenous. 
Thus, the nanodevices have only local information about the system, which they are not able to consolidate into a global state, with their communication and computation capabilities highly limited and local information subject to noise. 
Under these conditions, a designer of a nanonetwork has to set up nanodevices functioning according to a certain plan or policy.
In addition, the designer needs to assess, e.g., the network's robustness regarding a diagnosis or the success rate of a joint action of medicine delivery.

Therefore, this paper formalises the setting as an offline decision making problem, specifically, a decentralised partially observable Markov decision process (DecPOMDP).
The formalisation fits well as a network contains a set of collaborating nanodevices, i.e.,\emph{agents}, with limited capabilities in a \emph{partially observable} environment whose transition can be approximated with a \emph{stochastic} process.
Computation time in DecPOMDPs depends exponentially on the number of agents in the worst case, which is problematic as nanonetworks can have hundreds of thousands of agents.

However, only a limited number of types of nanodevices exists, thus, partitioning the set of agents into subsets of agents whose behaviour is \emph{indistinguishable}.
The notion of \emph{lifting} refers to the idea of efficiently handling sets of indistinguishable objects using representatives \cite{Poo03}, reducing the theoretical dependency from exponential to polynomial w.r.t.\ these set sizes.
Here, we apply lifting to the agents of a DecPOMDP.
Specifically, the contributions of this paper are twofold:
\begin{inparaenum}[(i)]
	\item \emph{lifted DecPOMDPs} with a theoretical analysis in terms of space requirements and 
	\item a \emph{nanoscale medical system} as an application,
\end{inparaenum}
forming a first step towards our long-term goal of a full formalisation combined with a lifted solution approach that allows for quantifying a joint action's success rate.

Lifting has been successfully used for probabilistic inference \cite[see, e.g., ][]{BroTaMeDaRa11,AhmKeMlNa13,BraMo18b,HolMiBr19} as well as online decision making \cite{NatDo09,ApsBr11,GehBrMo19b,GehBrMo19d}, which uses decision and utility nodes in a lifted probabilistic graphical model (PGM).
Lifting can even bring tractability in terms of the set sizes of indistinguishable objects \cite{NieBr14}.  
In offline decision making, lifting has been used in calculations for relational descriptions of (PO)MDPs:
First-order MDPs \cite[FOMDPs,][]{BouRePr01,SanBo09} have a representation based on the situation calculus \cite{McC63}. 
\citeauthor{SanKe10} \shortcite{SanKe10} use lifting for pruning indistinguishable policies in their solution approach for a partially observable FOMDP.
Factorised FOMDP assume a factorisation of its representation, blurring the line towards online decision making with the factored representation \cite{SanBo07}.
First-order open-universe POMDPs follow the open-universe assumption for the first-order description of an environment using Bayesian logic as a basis \cite{SriRuRuCh14}.
Another first-order representation is the language of independent choice logic that also allows for a set of agents \cite{Poo97}.
To the best of our knowledge, we are the first to consider lifting the agent set, which finds its application in nanoscale systems where large groups of indistinguishable nanodevices act jointly in an environment.
We leave the environment encoded by a non-lifted joint distribution in this paper since the application at hand does not lend itself to a highly structured model.
Nonetheless, dealing with structure in the environment, relational or factorised, presents an exciting avenue for future work to extend its applicability.

The remainder of this paper is structured as follows:
We start with an introduction to nanoscale medical systems and recap offline decision-making from MDPs to DecPOMDPs.
Then, we present lifted DecPOMDPs, followed by a discussion of a lifted DecPOMDP modelling a nanosystem and next steps towards solving such a problem.
We end with a conclusion.



\section{Nanoscale Medical Systems}
DNA-based nanonetworks have been proposed as an alternative to polymerase chain reaction, PCR for short, for detecting arbitrary diseases on the basis of DNA.
In this scenario, a disease sample is mixed with a medical nanosystem that computes a programmed function depending on environmental parameters to decide if a disease is present.
This section explains the basic mechanisms and ideas behind this novel technology and sets up how DecPOMDPs can be used as a model for it. 

In general, a computational process at nanoscale is subject to resource constraints, and collaboration between nanodevices might be necessary to achieve a goal \cite{akyildiz2008nanonetworks}.
In \citeyear{SEEMAN1982237}, \citeauthor{SEEMAN1982237} first proposed DNA as a construction material for nanoscale systems.
Based on this idea, \citeauthor{Rothemund2006} \shortcite{Rothemund2006} developed the DNA-origami method, which allows for creating almost arbitrary shapes using DNA at nanoscale.
In \citeyear{andersen2009self}, \citeauthor{andersen2009self} created a box with a controllable lid using the DNA-origami method.
These boxes serve as a basis for medical nanosystem technology.
They can be filled with either medication or DNA-tiles that serve as an input for a computation \cite{winfree1998design}.
Certain DNA-tile systems form a Turing-complete computational model and are much more capable than most widely used medical diagnostic tools \cite{LAU2019}.
For a detailed introduction, we refer to \citeauthor{winfree1998design} \shortcite{winfree1998design}.  

\Cref{subfig:andtileset,subfig:assembly} show an example system of DNA-tiles that computes a 4-bit \alg{and} operation.
The blocks in \cref{subfig:andtileset} represent DNA-tiles.
They are modelled as non-rotatable blocks with colour-coded \emph{glues} at possibly all sides.
The number of black boxes represents the \emph{strength} of glue and a label next to it a condition.
Tiles with the same number of boxes and identical labels can form a binding.
That binding is subject to an environmental parameter called \emph{temperature} $\tau$.
The temperature $\tau$ encodes the necessary number of fitting glues over all neighbours to form a stable binding.
In the presented example, the temperature is $2$, and all tiles need at least two neighbours to stably bind together.

An assembly process begins with a \emph{seed-tile} $\sigma$, shown in \cref{subfig:assembly} at the right.
Due to the temperature requirement of $2$, three tiles can bind to the seed-tile $\sigma$: tile T, B, or 1.
The assembly process continues until the entire DNA molecule in \cref{subfig:assembly} is formed.
The \emph{receptors} R can only bind with the molecule if tiles 1, 2, 3, and 4 are present.
If those tiles are only conditionally present, they can represent the input values of a computation, in this case of a 4-bit \alg{and}.
The molecule can only fully assemble if all four tiles are present.
The other tiles are assumed to be present at all times in the medium.

\begin{figure}[t]
\begin{subfigure}{.98\columnwidth}
	\centering
	\begin{tikzpicture}
		\tiletype at (1.7,3.4,2022,$\sigma$,uzl_oceangreen_25,d,~,b,a)
		\tiletype at (0,3.4,0021,T,uzl_oceangreen_25,~,~,d,b)
		\tiletype at (1.7,5.1,2001,B,uzl_oceangreen_25,b,~,~,c)
		\tiletype at (0,0,0111,T,uzl_oceangreen_25,~,b,d,b)
		\tiletype at (0,1.7,1101,B,uzl_oceangreen_25,b,c,~,c)
		\tiletype at (1.7,0,1110,R,uzl_red_2,X,b,e,~)
		\tiletype at (1.7,1.7,1212,1,uzl_yellow_2,d,a,b,$a_1$)
		\tiletype at (3.4,3.4,1110,R,uzl_red_2,e,c,Y,~)
		\tiletype at (0,5.1,1212,2,uzl_yellow_2,d,$a_1$,b,$a_2$)
		\tiletype at (3.4,1.7,1212,3,uzl_yellow_2,d,$a_2$,b,$a_3$)
		\tiletype at (3.4,0,1210,4,uzl_yellow_2,e,$a_3$,e,~)
	\end{tikzpicture}
	\caption{Tileset for a 4 bit-\alg{And} message molecule.}
	\label{subfig:andtileset}
\end{subfigure}
\par\medskip
\begin{subfigure}{.98\columnwidth}
	\centering
	\begin{tikzpicture}
		\tiletype at (6,0,2022,$\sigma$,uzl_oceangreen_25,d,~,b,a)
		\tiletype at (6,1.2,0021,T,uzl_oceangreen_25,~,~,d,b)
		\tiletype at (6,-1.2,2001,B,uzl_oceangreen_25,b,~,~,c)
		\tiletype at (4.8,1.2,0111,T,uzl_oceangreen_25,~,b,d,b)
		\tiletype at (3.6,1.2,0111,T,uzl_oceangreen_25,~,b,d,b)
		\tiletype at (2.4,1.2,0111,T,uzl_oceangreen_25,~,b,d,b)
		\tiletype at (1.2,1.2,1110,R,uzl_red_2,X,b,e,~)
		\tiletype at (4.8,-1.2,1101,B,uzl_oceangreen_25,b,c,~,c)
		\tiletype at (3.6,-1.2,1101,B,uzl_oceangreen_25,b,c,~,c)
		\tiletype at (2.4,-1.2,1101,B,uzl_oceangreen_25,b,c,~,c)
		\tiletype at (1.2,-1.2,1110,R,uzl_red_2,e,c,Y,~)

		\tiletype at (4.8,0,1212,1,uzl_yellow_2,d,a,b,$a_1$)
		\tiletype at (3.6,0,1212,2,uzl_yellow_2,d,$a_1$,b,$a_2$)
		\tiletype at (2.4,0,1212,3,uzl_yellow_2,d,$a_2$,b,$a_3$)
		\tiletype at (1.2,0,1210,4,uzl_yellow_2,e,$a_3$,e,~)
	\end{tikzpicture}
	\caption{4 bit-\alg{And} message molecule.}
	\label{subfig:assembly}
\end{subfigure}
\caption{Tileset and finished product of an assembly process of DNA-tiles.}
\label{and}
\end{figure}
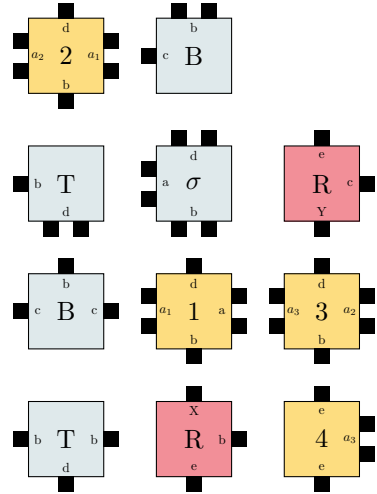
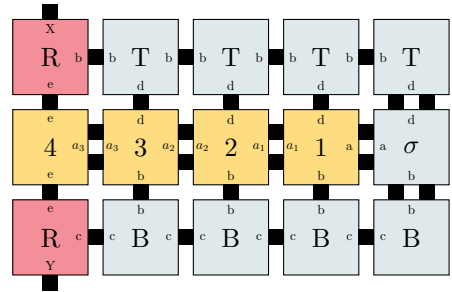

\begin{figure*}
\begin{center}
\begin{tikzpicture}[scale=1.0]

\path [every node/.style={
    anchor=west,
    text width=4cm,
}]
  node at (-1.25,9) {1. Marker detection}
  node at (-1.25,7.5) {2. Tile release}
  node at (-1.25,4) {3. Assembly}
  node at (-1.25,2) {4. Reception}
  node at (-1.25,0.5) {5. Medication}

  node at (8.75,9) {$\Bigg \}$ $k \geq 0$~markers}
  node at (8.75,7.5) {$\Bigg \}$ $i \gg 1$~nanosensors}
  node at (8.75,3.25) {$\Bigg \}$ $j \geq 0$~messages}
  node at (8.75,0.5) {$\Bigg \}$ $n \gg 1$~nanobots};

\path[every node/.style={
    shape=diamond,
    fill=uzl_orange_2,
    inner sep=1mm,
    rotate={rand*360},
}]
  (2.4, 9.2)  node (marker1) {}
  (3.4, 9.8)  node (marker2) {}
  (4.2, 9.7)  node (marker3) {}
  (5.1, 9.66) node (marker4) {}
  (6.5, 9.4)  node (marker5) {}
  (7.7, 9)    node (marker6) {};

\path[every node/.style={
  shape=star, star points=6, star point height=2.5mm,
  fill=uzl_oceanblue_1,
  draw=black!80,
  font=\smaller,
  inner sep=0mm,
  inner xsep=0mm,
  inner ysep=0mm,
  align=center,
}]
  (2.2, 7.7) node (sensor1) {Nano-\\sensor$_{1}$}
  (3.9, 8.2) node   (sensor2) {Nano-\\sensor$_{2}$}
  (5.6, 7.9) node (sensor3) {Nano-\\sensor$_{3}$}
  (7.4, 7.4) node (sensor4) {Nano-\\sensor$_{4}$};


\path[every node/.style={
  draw=black!80,
  font=\smaller,
  anchor=west,
  minimum width=3.5mm,
  minimum height=3.5mm,
  inner sep=0mm,
}]
  (2.6, 5.9) node [fill=uzl_yellow_2!50] (marker1a) {1}
  (3.3, 5.8) node [fill=uzl_yellow_2!50] (marker2a) {2}
  (4,   6.1) node [fill=uzl_yellow_2!50] (marker2b) {2}
  (4.6, 5.8) node [fill=uzl_yellow_2!50] (marker3a) {3}
  (5.4, 6.2) node [fill=uzl_yellow_2!50] (marker3b) {3}
  (6.1, 6.1) node [fill=uzl_yellow_2!50] (marker4a) {4}
  (6.8, 5.7) node [fill=uzl_yellow_2!50] (marker4b) {4}
  (1.3, 5.2) node [fill=uzl_oceangreen_10] (free1) {$\sigma$}
  (1.6, 6)   node [fill=uzl_red_2!50] (free6) {R}
  (2,   5)   node [fill=uzl_red_2!50] (free2) {R}
  (6.8, 4.1) node [fill=uzl_oceangreen_10] (free5) {T}
  (7.8, 4.3) node [fill=uzl_oceangreen_10] (free3) {T}
  (7.6, 5.3) node [fill=uzl_oceangreen_10] (free4) {$\sigma$}
  (3.3, 4.6) node [fill=uzl_oceangreen_10] (free7) {B};


\begin{scope}[nodes={
  draw=black!80,
  line width=0.1mm,
  font=\smaller,
  anchor=west,
  minimum width=3.5mm,
  minimum height=3.5mm,
  inner sep=0mm,
  column sep=-0.1mm,
  row sep=-0.1mm,
  }]

\foreach \x/\y/\n in {2.4/3.75/1, 6.9/2.7/2} {
  \path
  (\x, \y) node (msg\n) [matrix of nodes, ampersand replacement=\&,
  row 1 column 1/.style={nodes={fill=uzl_red_2!50}},
  row 1 column 2/.style={nodes={fill=uzl_oceangreen_10}},
  row 1 column 3/.style={nodes={fill=uzl_oceangreen_10}},
  row 1 column 4/.style={nodes={fill=uzl_oceangreen_10}},
  row 1 column 5/.style={nodes={fill=uzl_oceangreen_10}},
  row 2 column 1/.style={nodes={fill=uzl_yellow_2!50}},
  row 2 column 2/.style={nodes={fill=uzl_yellow_2!50}},
  row 2 column 3/.style={nodes={fill=uzl_yellow_2!50}},
  row 2 column 4/.style={nodes={fill=uzl_yellow_2!50}},
  row 2 column 5/.style={nodes={fill=uzl_oceangreen_10}},
  row 3 column 1/.style={nodes={fill=uzl_red_2!50}},
  row 3 column 2/.style={nodes={fill=uzl_oceangreen_10}},
  row 3 column 3/.style={nodes={fill=uzl_oceangreen_10}},
  row 3 column 4/.style={nodes={fill=uzl_oceangreen_10}},
  row 3 column 5/.style={nodes={fill=uzl_oceangreen_10}},] {
    R \& T \& T \& T \& T \\
    4 \& 3 \& 2 \& 1 \& $\sigma$ \\
    R \& B \& B \& B \& B \\
  };

  \path
      [fill=black!80]
      (msg\n) ++(-7.0mm,1.25mm)
      ++(0, 3.75mm)
      ++(-0.8mm, 0) rectangle +(1.4mm, 0.9mm)
      ++(0, -10mm) rectangle +(1.4mm, -0.9mm);
}


\path (4.5, 4.45) node (msg3) [matrix of nodes,draw=none,
  row 1 column 1/.style={nodes={fill=uzl_red_2!50}},
  row 1 column 2/.style={nodes={fill=uzl_oceangreen_10}},
  row 1 column 3/.style={nodes={fill=uzl_oceangreen_10}},
  row 1 column 4/.style={nodes={fill=uzl_oceangreen_10}},
  row 1 column 5/.style={nodes={fill=uzl_oceangreen_10}},
  row 2 column 1/.style={nodes={fill=uzl_yellow_2!50}},
  row 2 column 2/.style={nodes={fill=uzl_yellow_2!50}},
  row 2 column 3/.style={nodes={fill=uzl_yellow_2!50}},
  row 2 column 4/.style={nodes={fill=uzl_yellow_2!50}},
  row 2 column 5/.style={nodes={fill=uzl_oceangreen_10}},
  row 3 column 1/.style={nodes={fill=uzl_red_2!50}},
  row 3 column 2/.style={nodes={fill=uzl_oceangreen_10}},
  row 3 column 3/.style={nodes={fill=uzl_oceangreen_10}},
  row 3 column 4/.style={nodes={fill=uzl_oceangreen_10}},
  row 3 column 5/.style={nodes={fill=uzl_oceangreen_10}},] {
    &   &   &   & T \\
  4 & 3 & 2 & 1 & $\sigma$ \\
    &   & B & B & B \\
};

\end{scope}


\foreach \x/\y/\n in {3.5/2.1/1, 6/0.5/2} {

  \path [every node/.style={
      fill=uzl_oceanblue_1,
      draw=black!80,
      font=\smaller,
      align=center,
      anchor=east,
      minimum height=20mm,
      minimum width=12mm,
      inner sep=0mm,
        }]
    (\x, \y) node (machine\n) {Nano-\\bot};

  \path [every node/.style={
      draw=black!80,
      font=\smaller,
      align=center,
      minimum width=3.5mm,
      minimum height=3.5mm,
      inner sep=0mm,
  }]
    (\x, \y) ++(1.7mm, 7mm)  node (mRecU\n) {X}
    (\x, \y) ++(1.7mm, -7mm) node (mRecL\n) {Y};

  \path [fill=black!80, anchor=west]
    (\x, \y) ++(1.25mm, 4mm) ++(-0.7mm, 0) rectangle +(1.4mm, 0.7mm)
    (\x, \y) ++(1.25mm, -4mm) ++(-0.7mm, 0) rectangle +(1.4mm, -0.7mm);
}


\path [draw]
 (machine2) ++(-0.65,0.5) -- +(-0.65, 0.1)
            ++(0,   -1)  -- +(-0.65, -0.1);

\path [every node/.style={
    shape=isosceles triangle,
    minimum width=0.0mm,
    isosceles triangle apex angle=60,
    inner sep=0.6mm,
    fill=uzl_orange_2,
    rotate={rand*360},
  }]
   (machine2) ++(-0.75,0)
     node at +(-0.3, 0.2) (rel1) {}
     node at +(-0.6, -0.1) (rel2) {}
     node at +(-0.2, -0.3) (rel3) {};

\foreach \pos/\n in {machine1/5, machine2/4} {
\path [nodes={
  draw=black!80,
  line width=0.1mm,
  font=\smaller,
  anchor=west,
  minimum width=3.5mm,
  minimum height=3.5mm,
  inner sep=0mm,
  column sep=-0.1mm,
  row sep=-0.1mm,
  }]
  {(\pos.east) + (-0.01,0)} node (msg\n)
  [matrix of nodes, ampersand replacement=\&,
  row 1 column 1/.style={nodes={fill=uzl_red_2!50}},
  row 1 column 2/.style={nodes={fill=uzl_oceangreen_10}},
  row 1 column 3/.style={nodes={fill=uzl_oceangreen_10}},
  row 1 column 4/.style={nodes={fill=uzl_oceangreen_10}},
  row 1 column 5/.style={nodes={fill=uzl_oceangreen_10}},
  row 2 column 1/.style={nodes={fill=uzl_yellow_2!50}},
  row 2 column 2/.style={nodes={fill=uzl_yellow_2!50}},
  row 2 column 3/.style={nodes={fill=uzl_yellow_2!50}},
  row 2 column 4/.style={nodes={fill=uzl_yellow_2!50}},
  row 2 column 5/.style={nodes={fill=uzl_oceangreen_10}},
  row 3 column 1/.style={nodes={fill=uzl_red_2!50}},
  row 3 column 2/.style={nodes={fill=uzl_oceangreen_10}},
  row 3 column 3/.style={nodes={fill=uzl_oceangreen_10}},
  row 3 column 4/.style={nodes={fill=uzl_oceangreen_10}},
  row 3 column 5/.style={nodes={fill=uzl_oceangreen_10}},] {
    R \& T \& T \& T \& T \\
    4 \& 3 \& 2 \& 1 \& $\sigma$ \\
    R \& B \& B \& B \& B \\
  };
}


\path [draw=black!80,
  line width=0.15mm, densely dashed,
  >={Triangle[length=1.8mm, width=1.1mm]}, ->,
  shorten <=0.5mm,
  shorten >=1mm,
]
  (marker1) edge (sensor1)
  (marker2) edge (sensor2)
  (marker3) edge (sensor2)
  (marker4) edge (sensor3)
  (marker5) edge (sensor4)
  (marker6) edge (sensor4);

\path [draw=black!80,
  line width=0.15mm, densely dashed,
  shorten <=1mm,
  shorten >=0.5mm,
]
  (sensor1) edge (marker1a)
  (sensor2) edge (marker2a)
  (sensor2) edge (marker2b)
  (sensor3) edge (marker3a)
  (sensor3) edge (marker3b)
  (sensor4) edge (marker4a)
  (sensor4) edge (marker4b);

\path [draw=black!80,
  line width=0.15mm, densely dashed,
  >={Triangle[length=1.8mm, width=1.1mm]}, ->,
  shorten <=0.5mm,
  shorten >=3mm,
]
  (marker1a) edge (msg3)
  (marker2a) edge (msg3)
  (marker2b) edge (msg3)
  (marker3a) edge (msg3)
  (marker3b) edge (msg3)
  (marker4a) edge (msg3)
  (marker4b) edge (msg3);

\end{tikzpicture}

\caption[4 Bit-\alg{And}-nanonetwork]{Nanonetwork, which detects markers and assembles a 4 bit-\textsc{And} to achieve a distributed consensus.}
\label{scenario}
\end{center}
\end{figure*}
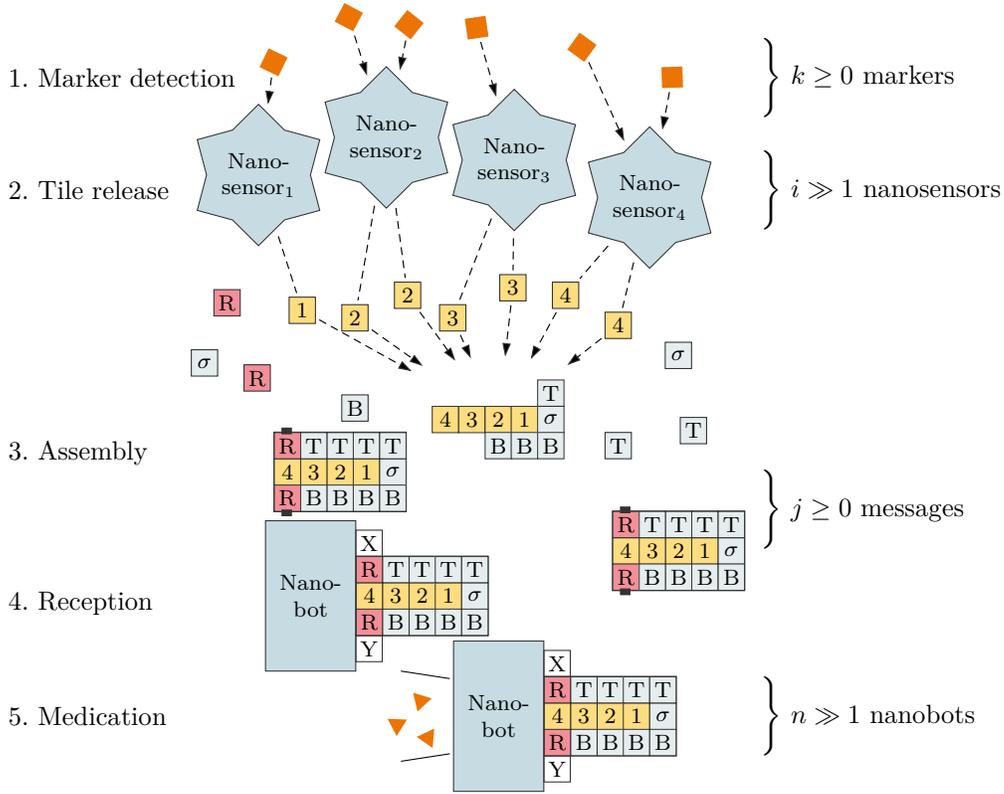

\Cref{scenario} shows the same DNA molecule assembly process incorporated into a network structure.
In the first phase, a possibly large number of predefined markers are detected by a very large number of indistinguishable nanosensors of four different types.
Upon detection, the lids of the nanosensors open, and they release their tiles 1--4 into the medium.
If all four tiles are present in high enough numbers, message molecules can fully assemble and later be detected by a number of indistinguishable nanobots.
Those react by predefined programming, e.g., releasing a fluorescence marker, medication, or additional tiles.
Many operations are possible depending on the desired response by the nanonetwork.

The entire assembly process is guided by diffusion and Brownian motion.
It can only be controlled by the types of supplied tiles, their concentration, and the environmental temperature.
Due to the stochastic nature, it is never fully clear where or when a message molecule forms and if it is erroneous. 
Further, the nanodevices themselves have no global and very limited local information that is also noisy about their environment.

Now that the basic functionality of DNA-based nanonetworks is clear, we can model part of them as DecPOMDPs.
The goal of this model is to coordinate the actions of the large number of indistinguishable devices in the nanonetwork to decide a predefined problem.
Since the nanobots are extremely resource-constrained, they only have partial information about the global state of the system.
In addition to that, communication in nanonetworks is expensive and should be reduced to a minimum.
One of the potential benefits of the proposed model is to get an estimate of the success rate of such a system.

\section{From Single Agent to Multiple Agents}
This section highlights the genesis of offline decision making from single-agent-fully-observable to multi-agent-partially-observable-joint-reward, which forms the basis for modelling nanoscale systems.
The underlying principle is that of \emph{maximum expected utility} (MEU) in which a utility function represents preferences over states.
We look through a PGM lens when setting up definitions, which are based on \citeauthor{RusNo20} \shortcite{RusNo20}, using random variables, $R$, which can take discrete values, referred to as range, $ran(R) = \{r_1, \dots, r_m\}$.
If $R$ is set to a value of its range $r \in ran(R)$, also called an \emph{event}, we denote it as $R=r$ or $r$ for short if $R$ is clear from its context.
Random variables that have actions as ranges are called \emph{decision random variables}.
For solving an MDP or one of its extensions, we focus on \emph{value iteration} as an example solution approach.
For a more detailed look into different solution techniques, please refer to \citeauthor{OliAm16} \shortcite{OliAm16} as a starting point with a focus on DecPOMDPs.

\paragraph{MDP}
An MDP is a sequential decision problem in a fully observable environment with a Markov-1 transition model and additive rewards, which implies that an agent's preferences are independent w.r.t.\ time.
\begin{definition}\label{def:mdp}
	An \emph{MDP} is a tuple $(S, A, T, R)$, with
	\begin{itemize}
		\item $S$ a random variable describing an agent's environment (the state space is the range of $S$), 
		\item $A_s$ a decision random variable with a set of actions for each state $s \in ran(S)$ as its range,
		\item $T(S',S,A_S) = P(S' \mid S, A_S)$ a \emph{transition model} (if $a \not\in ran(A_s)$ for a given $s$, then $P(s' \mid s, a) = 0$), and
		\item $R(S)$ a \emph{reward function}.
	\end{itemize}
	The solution to an MDP is a function $\pi: ran(S) \mapsto ran(A_S)$ called a \emph{policy}.
\end{definition}
The reward may also depend on the action applied as well as the resulting state, which does not change the problem itself in a major way.
Additionally, one can specify a \emph{discount} factor $\gamma \in ]0,1]$.
A discount factor close to $0$ signifies that future rewards are irrelevant.
The factor corresponds to an interest rate of ${1-\gamma}/{\gamma}$.
The utility of a state $s \in ran(S)$ can be described using the Bellman equation:
\begin{align}
	U(s) = R(s) + \gamma \max_{a \in ran(A_s)} \smashoperator[r]{\sum_{s' \in ran(S)}} P(s' \mid s, a) U(s'). \label{eq:bellman}
\end{align}
The inner sum adds up \emph{expected utilities} of each state $s'$, i.e., the utility of $s'$ multiplied with the probability of reaching $s'$ from $s$ applying an action $a$.
The optimal policy $\pi^*$ is then given by:
\begin{align*}
	\pi^*(s) = \argmax_{a \in ran(A_s)} \sum_{s' \in ran(S)} P(s' \mid s, a) U(s'),
\end{align*}
i.e., the agent chooses the $a$ that yields the maximum expected utility.
A standard approach to solving an MDP is called \emph{value iteration}.
It uses the bellman equation (\cref{eq:bellman}) in an update version:
\begin{align*}
	U_{t+1}(s) \gets R(s) + \gamma \max_{a \in ran(A_s)} \smashoperator[r]{\sum_{s' \in ran(S)}} P(s' \mid s, a) U_t(s'),
\end{align*}
updating the utilities until a stopping criterion is fulfilled, usually based on $\gamma$ and an allowed error $\epsilon$, e.g., $\max_s |U_{t+1}(s)-U_{t}(s)| < \epsilon {(1-\gamma)}/{\gamma}$.

\paragraph{POMDP}
If an agent cannot reliably observe or is not able to fully determine its state, the state is considered latent or partially observable, which leads to POMDPs.
\begin{definition}\label{def:pomdp}
	A \emph{POMDP} is a tuple $(S, A, T, R, O, \Omega)$ with 
	\begin{itemize}
		\item $S, A, T, R$ the components of an MDP, 
		\item $O$ a random variable with a set of possible observations as a range, and 
		\item $\Omega(O, S) = P(O \mid S)$ a \emph{sensor model}.
	\end{itemize}
	A \emph{belief MDP} is a tuple $(B, A, T, R, O, \Omega)$, i.e., a POMDP where the latent $S$ is replaced by random variable $B$ with an infinite set of belief states as range, with a \emph{belief state} $b \in ran(B)$ referring to a probability distribution over $S$.
	The state variables $S, S'$ in $T$, $R$, and $\Omega$ are replaced by belief state variables $B, B'$.
\end{definition}
The sensor model, analogously to the reward function, may also depend on an action as well as the outcome state.
To solve a POMDP, a corresponding belief MDP is solved, where optimal actions in an optimal policy depend on belief states (instead of actual states).
The solution to a belief MDP is called a conditional plan that combines \texttt{if\dots then \dots else} clauses, with the \texttt{if} conditions depending on observations and the bodies of the \texttt{if} and \texttt{else} parts containing actions.
Each conditional plan $p$ has a depth $d$ and is recursively constructed using plans of depth $d-1$:
\begin{align}
	U_p(s) = R(s) + \gamma \smashoperator{\sum_{s'\in ran(S)}} P(s' | s, a) \smashoperator{\sum_{o \in ran(O)}} P(o | s') U_{p.o}(s') \label{eq:condplan}
\end{align}
where $a$ is the initial action in $p$ and $p.o$ refers to the subplan of depth $d-1$ for percept $o$ that follows after $a$.
\Cref{eq:condplan} yields a value iteration algorithm.
All plans form hyperplanes in the belief space.
Those plans that have the highest values for some area of the space are dominating plans, whereas dominated plans have lower values over every possible belief state.
An integral part of value iteration is to eliminate dominated plans, which means for the recursive construction that fewer conditional plans need to be considered.

\paragraph{DecPOMDP}
In contrast to the single-agent setting so far, a DecPOMDP models a set of agents working jointly towards a common goal, a scenario relevant for nanoscale medical systems.
\begin{definition}\label{def:decpomdp}
	A \emph{DecPOMDP} is a tuple $(\boldsymbol{I},\allowbreak S,\allowbreak \{A_i\}_{i\in \boldsymbol{I}},\allowbreak T,\allowbreak R,\allowbreak \{O_i\}_{i\in \boldsymbol{I}},\allowbreak \Omega)$, with
	\begin{itemize}
		\item $\boldsymbol{I}$ a set of $N$ agents, 
		\item $S$ a random variable with a set of states as range, 
		\item $A_i$ a decision random variable with a set of local actions as range for each agent $i \in \boldsymbol{I}$, with $\boldsymbol{A} = \times_{i \in \boldsymbol{I}} ran(A_i)$ the set of \emph{joint actions},  
		\item $T(S', S, \boldsymbol{A}) = P(S' \mid S, \mathbf{A})$ a transition model,
		\item $R(S)$ a reward function, 
		\item $O_i$ a random variable with a set of local observations as range for each agent $i \in \boldsymbol{I}$, with $\boldsymbol{O} = \times_{i \in \boldsymbol{I}} ran(O_i)$ the set of \emph{joint observations}, and
		\item $\Omega(\boldsymbol{O}, S) = P( \boldsymbol{O} \mid S)$ a sensor model.
	\end{itemize}
	Each agent $i \in \boldsymbol{I}$ has a local policy $\pi_i$.
	The solution to a DecPOMDP is a \emph{joint policy} $\boldsymbol{\pi} = (\pi_i)_{i\in \boldsymbol{I}}$.
\end{definition}
In a DecPOMDP, each agent has its own set of actions and possible observations\footnote{$A_i = A_j$, $i,j \in \boldsymbol{I}$, is  possible but not mandatory. The same holds for the sets of local observations.} whereas the state and reward function are joint.
The joint state is usually assumed to not be fully observable, even if combining all local observations.
This is also the case for nanoscale systems.
If the joint state and reward function can be split up into mostly independent subspaces per agent, the DecPOMDP decomposes into a set of POMDPs that can be solved individually.
The joint reward function encodes that the set of agents receives a reward from the environment as a team in contrast to individual rewards for each agent. 
The generalisation of a DecPOMDP is a partially observable stochastic game in which each agent has its own reward function $R_i(S)$, which possibly conflicts with other agents' rewards.

A straight-forward way to find a solution to a DecPOMDP is to build conditional plans for each agent using value iteration and eliminate dominated plans considering all agents.
This approach quickly runs into memory problems because of the explosion of agent numbers, possible states, actions, and observations.
To formalise the space requirements in a DecPOMDP, we set up a simple lemma, which makes the combinatorial explosion by the agent numbers $N$ explicit.
\begin{lemma}\label{lem:decpomdp}
	The worst-case memory required for specifying a DecPOMDP of \cref{def:decpomdp} is exponential in the number of agents $N$.
\end{lemma}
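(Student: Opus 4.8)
The plan is to isolate the components of the tuple in \cref{def:decpomdp} whose size grows with $N$, show that the dominant cost among them is exponential, and confirm that no component exceeds exponential size, so that the overall worst-case specification is exponential in $N$.

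First I would observe that the number of agents $N$, the state range $ran(S)$, and the reward function $R(S)$ contribute at most polynomially (indeed $S$ and $R$ do not depend on $N$ at all), and likewise each per-agent set $ran(A_i)$ and $ran(O_i)$ is specified independently. The combinatorial blow-up enters only through the joint action set $\boldsymbol{A} = \times_{i \in \boldsymbol{I}} ran(A_i)$ and the joint observation set $\boldsymbol{O} = \times_{i \in \boldsymbol{I}} ran(O_i)$, which are Cartesian products over all $N$ agents. If each agent has at least two local actions, then $|\boldsymbol{A}| = \prod_{i \in \boldsymbol{I}} |ran(A_i)| \geq 2^N$, and analogously $|\boldsymbol{O}| \geq 2^N$ when each agent has at least two local observations.

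Next I would trace this growth into the transition and sensor models, which are the objects indexed by the joint sets. The transition model $T(S', S, \boldsymbol{A}) = P(S' \mid S, \boldsymbol{A})$ is a conditional probability table with one entry for each triple $(s', s, \boldsymbol{a})$ with $\boldsymbol{a} \in \boldsymbol{A}$, so storing it requires on the order of $|ran(S)|^2 \cdot |\boldsymbol{A}|$ values; similarly $\Omega(\boldsymbol{O}, S) = P(\boldsymbol{O} \mid S)$ requires on the order of $|\boldsymbol{O}| \cdot |ran(S)|$ values. Substituting the bounds from the previous step yields a lower bound of $2^N$ (up to the constant factors contributed by the fixed state range), establishing that the worst-case specification size is at least exponential in $N$. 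For the matching upper bound I would note that each of the finitely many tuple components is bounded by a product of per-agent ranges with the state range, each such product being at most exponential in $N$; summing a constant number of such terms keeps the total exponential.

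The main obstacle is definitional rather than computational: I must fix what counts as the \emph{worst case}, namely that every agent is endowed with at least two local actions (or observations), since with singleton action sets the Cartesian product collapses and the bound degenerates. Pinning down this genericity assumption, and confirming it is the intended reading of \cref{def:decpomdp}, is the only delicate point; the counting of table entries afterwards is routine.
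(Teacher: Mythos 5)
Your proposal is correct and takes essentially the same route as the paper's proof: both count the entries of the conditional probability tables $T(S',S,\boldsymbol{A})$ and $\Omega(\boldsymbol{O},S)$, whose conditioning on the Cartesian-product joint actions and observations yields sizes $O(s \cdot s \cdot a^N)$ and $O(s \cdot o^N)$ with $s=|ran(S)|$ and $a,o$ the maximal local range sizes. Your only addition is making the lower bound explicit (at least two local actions or observations per agent giving $|\boldsymbol{A}|, |\boldsymbol{O}| \geq 2^N$), a genericity point the paper leaves implicit by stating only the $O(\cdot)$ upper bounds and asserting exponentiality.
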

\begin{proof}
	The transition model $T(S', S, \boldsymbol{A})$ and sensor model $\Omega(\boldsymbol{O}, S)$ of a DecPOMDP of \cref{def:decpomdp} have sizes $S_{T}^{dec}$ and $S_{\Omega}^{dec}$, respectively, that lie in 
	\begin{align}
		S_{T}^{dec} \in O( s \cdot s \cdot a^N) && \text{and} && S_{\Omega}^{dec} \in O (s \cdot o^N),	\label{eq:decpomdpsize}
	\end{align}
	with $s=|ran(S)|$, $a = \max_{i\in \{1, \dots, N\}|ran(A_i)|}$, and $o = \max_{i\in \{1, \dots, N\}|ran(O_i)|}$, which is exponential in the number of agents $N$. 
\end{proof}
For cases, in which the agent set carries structure in the form of partitions of indistinguishable agents, we present lifted DecPOMDPs as a step towards making the problem tractable.



\section{Lifting DecPOMDPs}
\label{sec:mdp}
Modeling a nanoscale system as a DecPOMDP yields a large set of agents $\boldsymbol{I}$ with partitions forming in which agents have the same setup regarding available actions and possible observations.
To get a handle on the large $\boldsymbol{I}$, we apply lifting to $\boldsymbol{I}$, using the identical setup of partitions for a compact encoding.
Before we present lifted DecPOMDPs, we need to consider under what conditions we can lift the agent set and what consequences this lifting has for the different components of a DecPOMDP.
Therefore, we first define a liftable DecPOMDP, then discuss its consequences, yielding a definition of a lifted DecPOMDPs.
We end with a look at worst-case space requirements.
\begin{definition}\label{def:liftabledecpomdp}
	A \emph{liftable DecPOMDP} is a DecPOMDP in which the set of agents $\boldsymbol{I}$ is partitioned into $K$ sets $\mathfrak{I}$, i.e., $\boldsymbol{I} = \bigcup_{k=1}^K\mathfrak{I}_k$ and $\forall k,l \in \{1, \dots, K\} : \mathfrak{I}_k \cap \mathfrak{I}_l = \emptyset$, where in each partition $\mathfrak{I}_k$, it holds that $\forall i,j \in \mathfrak{I}_k$:
	\begin{align}
		ran(A_i) = ran(A_j) \wedge ran(O_i) = ran(O_j). \label{eq:sameacts}
	\end{align}
\end{definition}
As a consequence of \cref{eq:sameacts}, it is sufficient to use $K$ variables $A_k$ and $O_k$ that apply to all agents in a partition $\mathfrak{I}_k$ instead of $A_i$ and $O_i$ for all $i \in \boldsymbol{I}$.
One may even explicitly model this pattern by using $A_k(X_k)$ and $O_k(X_k)$ with $X_k$ a logical variable that represents all agents in $\mathfrak{I}_k$ (for the use of logical variables in random variables, see, e.g., the work by \citeauthor{Tag13} \shortcite{Tag13}).

With a partitioning of the agent set, the joint observation $\boldsymbol{o}= (o_1, \dots, o_N)$ consists of $K$ \emph{partition observations} $\boldsymbol{o}_k = (o_{k,1}, \dots, o_{k,|\mathfrak{I}_{k}|})$.
Due to the ranges being discrete and bounded, each $\boldsymbol{o}_k$ can be lifted as each $o \in \boldsymbol{o}_k$ can only be in $ran(O_k)$.
Picking up the idea of histograms, which are used by the lifting tool of counting (see the work by \citeauthor{MilZeKeHaKa08}~\shortcite{MilZeKeHaKa08} for an introduction into counting), we can formally describe the setting as follows:
There are $r_k = |ran(O_k)|$ different possible observations per partition $\mathfrak{I}_k$, meaning, $\boldsymbol{o}_k$ can be encoded using a histogram:
\begin{align}
	\{(o_l, n_l)\}_{l=1}^{r_k}, o_l \in ran(O_k), n_l \in \mathbb{N}^0, \textstyle\sum_l n_l = |\mathfrak{I}_k|, \label{eq:hist}
\end{align}
or $[n_1, \dots, n_{r_k}]$ for short.
A histogram makes explicit a very basic insight: It does not matter which of the $|\mathfrak{I}_k|$ agents observe a particular value $o_l$, only how many observe $o_l$.
Basically, we have introduced a so-called counting random variable (CRV) for each partition with histograms as possible range values \cite{MilZeKeHaKa08}.
Briefly coming back to the representation of each partition using a logical variable $X_k$, the idea is that for a random variable parameterised with a logical variable such as $O_k(X_k)$, we count how often a particular range value is assigned to any grounding of $O_k(X_k)$, leading to a CRV $\#_{X_k}[O_k(X_k)]$, with histograms of \cref{eq:hist} as a range.
Given this insight into lifting partition observations, the joint observation over a partitioned agent set turns into
\begin{align}
	\boldsymbol{o} = (h_1, \dots, h_{K}), h_k \in ran(\#_{X_k}[O_k(X_k)]). 	\label{eq:seqhist}
\end{align}
The size of $\boldsymbol{o}$ reduces to $K \cdot \max_{k\in \{1,\dots,K\}}|ran(O_k)|$, which we assume to be much smaller than $N$.
If we can expect the observations per partition to consist of the same observation for whole partitions, then storing only one value per group would be enough (histograms would be \emph{peak-shaped} with one $n_l = |\mathfrak{I}_k|$ and all other $n_{l'} = 0$), with the size of $\boldsymbol{o}$ reducing to $K$.

Actions can be treated analogously to observations:
A joint action $\boldsymbol{a}$ is formalised as a sequence of partition actions $A_k$ encoded as histograms using a CRV $\#_{X_k}[A_k(X_k)]$. 
The size of $\boldsymbol{a}$ reduces from $N$ as well to $K \cdot \max_{k\in \{1,\dots,K\}}|ran(A_k)|$.
With actions, we also might be able to go a step further: 
For indistinguishable agents, the same action leads to a maximum expected utility (unless further constraints take effect), yielding that in a policy, the same action applies for a partition, which means peak-shaped histograms, and requiring only one action to store per partition.

Since we assume that the agents within a partition are indistinguishable, the consequence for the transition model $T$ and the sensor model $\Omega$ is that they both contain certain symmetric structures.
There are a number of joint observation $(o_1, \dots, o_N)$ that are encoded with the same sequence of histograms (\cref{eq:seqhist}), namely whenever, in each partition, the numbers $n_l$ in a histogram occur but with a different permutation of agents yielding them.
The same holds for joint actions.
As such, the transition model and the sensor model will map to the same probability for those cases.
Using a histogram allows for combining all the inputs mapping to the same probability into one input, reducing the size of $T$ and $\Omega$.
Before we formalize what reduction we can get, we combine the above lifting of actions, observations, and models into the following definition of a lifted DecPOMDP:
\begin{definition}\label{def:lifteddecpomdp}
	A \emph{lifted DecPOMDP} is a tuple $(\{\mathfrak{I}_k\}_{k},\allowbreak S,\allowbreak \{\#_{X_k}[A_k(X_k)]\}_{k},\allowbreak T,\allowbreak R,\allowbreak \{\#_{X_k}[O_k(X_k)]\}_{k},\allowbreak \Omega)$, $k=1$ to $K$, with
	\begin{itemize}
		\item $\{\mathfrak{I}_k\}_{k=1}^K$ a \emph{partitioning} of an agent set $\boldsymbol{I}$, $|\boldsymbol{I}| = N$, 
		\item $S$ a random variable with a set of states as range, 
		\item $\#_{X_k}[A_k(X_k)]$ a CRV with a set of histograms as in \cref{eq:hist} as range, with $\boldsymbol{A} = \times_{k = 1}^K ran(\#_{X_k}[A_k(X_k)])$ the set of joint actions,  
		\item $T(S', S, \boldsymbol{A}) = P(S' \mid S, \mathbf{A})$ a transition model,
		\item $R(S)$ a reward function, 
		\item $\#_{X_k}[O_k(X_k)]$ a CRV with a set of histograms as in \cref{eq:hist} as range, with $\boldsymbol{O} = \times_{k=1}^K ran(\#_{X_k}[O_k(X_k)])$ the set of joint observations, and
		\item $\Omega(\boldsymbol{O}, S) = P( \boldsymbol{O} \mid S)$ a sensor model.
	\end{itemize}
	Each partition $\mathfrak{I}_k$ has a local policy $\pi_k$.
	The solution to a DecPOMDP is a joint policy $\boldsymbol{\pi} = (\pi_k)_{k=1}^K$.
\end{definition}

\begin{theorem}\label{thm:size}
	The worst-case memory required for specifying a lifted DecPOMDP of \cref{def:lifteddecpomdp} is polynomial in the number of agents $N$.
\end{theorem}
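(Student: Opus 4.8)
The plan is to mirror the structure of the proof of \cref{lem:decpomdp}, again isolating the transition model $T(S', S, \boldsymbol{A})$ and the sensor model $\Omega(\boldsymbol{O}, S)$ as the size-dominating components and showing that, in the lifted encoding, the joint action space $\boldsymbol{A}$ and joint observation space $\boldsymbol{O}$ are now polynomially rather than exponentially sized in $N$. The reward $R(S)$ contributes only the factor $s = |ran(S)|$ and the partitioning $\{\mathfrak{I}_k\}_k$ costs at most $O(N)$ to store, both dominated by the model terms, so the whole argument reduces to counting the lifted joint ranges.

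First I would fix a single partition $\mathfrak{I}_k$ with $N_k = |\mathfrak{I}_k|$ agents and count the range of its action CRV $\#_{X_k}[A_k(X_k)]$. By \cref{eq:hist}, each range value is a histogram $[n_1, \dots, n_{r_k}]$ with $r_k = |ran(A_k)|$ nonnegative integer entries summing to $N_k$. A standard stars-and-bars argument gives exactly $\binom{N_k + r_k - 1}{r_k - 1}$ such histograms, which for fixed $r_k$ is a polynomial in $N_k$ of degree $r_k - 1$, hence $O(N^{r_k - 1})$ since $N_k \le N$. The identical count with $r_k = |ran(O_k)|$ bounds the range of the observation CRV $\#_{X_k}[O_k(X_k)]$.

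Next I would assemble the joint ranges. Since $\boldsymbol{A} = \times_{k=1}^K ran(\#_{X_k}[A_k(X_k)])$, its cardinality is $\prod_{k=1}^K \binom{N_k + r_k - 1}{r_k - 1}$. Writing $a = \max_k |ran(A_k)|$ and using $N_k \le N$, each of the $K$ factors is $O(N^{a-1})$, so $|\boldsymbol{A}| \in O(N^{(a-1)K})$; analogously $|\boldsymbol{O}| \in O(N^{(o-1)K})$ with $o = \max_k |ran(O_k)|$. Multiplying by the state factors then yields
\begin{align*}
	S_T^{lift} \in O\left(s^2 \cdot N^{(a-1)K}\right) && \text{and} && S_\Omega^{lift} \in O\left(s \cdot N^{(o-1)K}\right),
\end{align*}
both polynomial in $N$, which completes the argument.

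The main obstacle is conceptual rather than computational: the bound is polynomial only because the exponent $(a-1)K$ (respectively $(o-1)K$) is treated as a \emph{constant} independent of $N$. I would therefore state explicitly that the number of partitions $K$ and the per-partition range sizes $a, o$ are fixed parameters of the model — precisely the regime motivated by the application, where a nanonetwork has only a handful of device types but a huge agent count. Without this assumption the product of $K$ binomials could itself grow with $N$, so the clean separation between the constant type-parameters and the growing agent count is the crux that makes lifting pay off.
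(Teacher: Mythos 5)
Your proposal is correct and follows essentially the same route as the paper's proof: isolate $T$ and $\Omega$ as the dominating components, count the histogram range of each CRV via $\binom{N_k + r_k - 1}{r_k - 1}$, and raise the per-partition polynomial bound to the power $K$ (the paper cites the bound $\binom{n+r-1}{r-1} \le n^r$ with $n = \max_k |\mathfrak{I}_k|$ from \citeauthor{MilZeKeHaKa08}, yielding $O(s^2 (n^a)^K)$, while your stars-and-bars derivation gives the marginally tighter exponent $(a-1)K$ in terms of $N$). Your closing remark that $K$, $a$, and $o$ must be treated as constants independent of $N$ is a point the paper leaves implicit, and it is worth making explicit.
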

\begin{proof}
	The transition model $T(S', S, \boldsymbol{A})$ and sensor model $\Omega(\boldsymbol{O}, S)$ of a lifted DecPOMDP of \cref{def:lifteddecpomdp} have sizes $S_{T}^{lif}$ and $S_{\Omega}^{lif}$, respectively, that lie in
	\begin{align*}
		S_{T}^{lif} \in O(s \cdot s \cdot a_l^K) && \text{and} && S_{\Omega}^{lif} \in O(s \cdot o_l^K),
	\end{align*}
	$s=|ran(S)|$, $a_l = \max_{k\in \{1, \dots,K\}} |ran(\#_{X_k}[A_k(X_k)])|$, and $o_l = \max_{k\in \{1, \dots,K\}} |ran(\#_{X_k}[O_k(X_k)])|$. 
	The size of the histogram space for a CRV $\#_{X}[R(X)]$ with a range size $r$ of $R$ and a domain size of $n$ of the logical variable $X$ is given by $\binom{n+r-1}{r-1}$, which is bounded by $n^r$ \cite{MilZeKeHaKa08}.
	Transferred to the lifted DecPOMDP setting, $a_l$ and $o_l$ can be described with $n^a$ and $n^o$, respectively, with $n = \max_{k \in \{1, \dots, K\}} |\mathfrak{I}_k|$, $a = \max_{k\in \{1, \dots, K\}|ran(A_k)|}$, and $o = \max_{k\in \{1, \dots, K\}|ran(O_k)|}$.
	Therefore, the worst case is given by 
	\begin{align}
		S_{T}^{lif} \in O(s \cdot s \cdot (n^a)^K) \text{ and } S_{\Omega}^{lif} \in O(s \cdot (n^o)^K), \label{eq:lifteddecpomdpsize}
	\end{align}
	which is no longer exponential compared to \cref{eq:decpomdpsize} in \cref{lem:decpomdp}, but polynomial in $n < N$.
\end{proof}
Assuming that $N \ll K$, then $S_T^{lif} \ll S_T^{dec}$ and $S_\Omega^{lif} \ll S_\Omega^{dec}$, even though $n^a > a$ and $n^o > o$.
If requiring only peak-shaped histograms, we have the best case: 
\begin{align*}
	S_{T}^{lif} \in O(s \cdot s \cdot a^K) && \text{and} && S_{\Omega}^{lif} \in O(s \cdot o^K),
\end{align*}
where the exponent of $N$ is replaced by the exponent of $K$ compared to \cref{eq:decpomdpsize}, which with $N \ll K$ really showcases the reduction in memory required.

Having the representation no longer depends exponentially on the number of agents facilitates an opening towards tractable inference in lifted DecPOMDPs w.r.t.\ the size of the agent set:
The lifted components may enable to also lift the calculations for a policy regarding these agents such that solving the inference problem of finding a policy no longer depends exponentially on the number of agents (tractability).
Straightforwardly, one could use the value iteration approach for DecPOMDPs by building conditional plans for each partition and prune plans over all partitions.

%


\section{Discussion}
\label{sec:dis}
This section presents a nanoscale medical system, as described earlier, modeled as a lifted DecPOMDP and discusses the next steps for the formalism.

\paragraph{A Nanoscale System Application}
To model a nanoscale medical system as a lifted DecPOMDP, we need to specify the components of a lifted DecPOMDP.
The following description models a nanoscale medical system as sketched in the brief introduction into such systems in the beginning of this paper.

The \emph{set of agents} $\boldsymbol{I}$ with its $K$ partitions consists of the different nanosensors and nanobots.
There are $\kappa$ types of nanosensors, each type reacting to one of $\kappa$ different markers.
So, for each type, there is a set of nanosensors, forming a partition $\mathfrak{I}_k^{\kappa}$ in $\boldsymbol{I}$.
For the nanobots, the setting is the same w.r.t.\ the types of messages that different types of nanobots react to.
With $\iota$ message types, there are $\iota$ sets of nanobots, each forming a partition $\mathfrak{I}_k^{\iota}$ in $\boldsymbol{I}$.
Preliminary experiments have shown that each partition may have around $64{,}000$ agents in such a nanoscale medical system, making the agent set at least of size $(\kappa + \iota) \cdot 64{,}000$.

Each type of agent basically has one action, which it can select to perform, and one possible observation.
So, in terms of the model, there are two actions in each partition: 
\begin{inparaenum}[(i)]
	\item outputting its load, which are tiles for nanosensors and medication for nanobots, or 
	\item doing nothing,
\end{inparaenum}
and two observations:
\begin{inparaenum}[(i)]
	\item for nanosensors, sensing a marker and for nanobots, receiving a message or
	\item sensing / receiving nothing.
\end{inparaenum}

The physical state space can be described in terms of the presence of markers and assembled messages of certain types.
Considering only assembled messages is a simplification as messages undergo a series of different states themselves during assembly, but only the assembled message is of importance for a nanobot.
With $\kappa$ different markers and $\iota$ different messages, there are $2^{\kappa} \cdot 2^{\iota}$ states.
Of course, other representations of the physical state space are possible, e.g., focusing on the medical context in a more detailed way.

With the given physical state space, the transition model $T$ would need to model the presence of markers, which could possibly follow a Poisson distribution, as well as the presence of messages, which we assume to follow a log-normal distribution.
To be able to compute a solution, further approximations might be necessary.
The overall goal is that nanobots output their medication if corresponding markers are present, which the reward function needs to encode.
The sensor model $\Omega$ would need to capture the probability of sensing correct inputs, which can vary greatly depending on outside influences.
In general, we have the following sources of error:
Nanosensors might sense a marker even though there is none or it might sense a marker of a wrong type as its own.
Nano\-agents might sense that they received a correct message even though the message is not there, the message is of another type, or the message is incorrectly assembled.
Both types of nanodevices might also mistake a received input as there not being an input.

To close out this application example, let us consider the worst case space requirements of a nanoscale medical system modelled as a lifted DecPOMDP:
If we consider four types of marker and one type of message, e.g., for detecting a specific disease, which means $\kappa = 4$ and $\iota = 1$, we have a state space of size $s = 2^4 \cdot 2^1 = 32$ whereas our agent set is of size $N = (4+1) \cdot 64{,}000 = 320{,}000$ partitioned into $K= 4 + 1 = 5$ partitions.
With these parameters and in reference to \cref{eq:decpomdpsize,eq:lifteddecpomdpsize}, the model sizes of $T$ are 
\begin{align}
	S_{T}^{dec} &\in O( 32 \cdot 32 \cdot 2^{320{,}000})  \label{eq:nanosize:t}\\
	S_{T}^{lif} &\in O( 32 \cdot 32 \cdot (64{,}000^2)^5) \label{eq:liftednanosize:t}\\
\intertext{and of $\Omega$ are}
	S_{\Omega}^{dec} &\in O (32 \cdot 2^{320{,}000}) \label{eq:nanosize:o}\\
	S_{\Omega}^{lif} &\in O(32 \cdot (64{,}000^2)^5) \label{eq:liftednanosize:o}
\end{align}
in the worst case.
\Cref{eq:nanosize:t,eq:liftednanosize:t,eq:nanosize:o,eq:liftednanosize:o} highlight to what a large degree the number of agents represent the dominating parameter in a lifted DecPOMDP with a nanoscale system as an application.


\paragraph{Work in Progress}
The next big step after the inception of lifted DecPOMDPs is specifying and implementing an approach to solving lifted DecPOMDPs.
A starting point lies in a value iteration approach for DecPOMDPs, which we lift for partitions of agents.
The hypothesis is that the goal of \emph{tractable inference w.r.t.\ agent numbers} is attainable given \cref{thm:size}.

In a next step, we focus on the environment representation $S$.
Assuming that there are a set of random variables $\{R_1, \dots, R_m\}$, with which we can describe the environment, the size of the state space is exponential in $m$, i.e., $O(r^m)$, with $r= \max_{i \in \{1, \dots, m\}}|ran(R_i)|$, which means for \cref{eq:decpomdpsize,eq:lifteddecpomdpsize} $s = r^m$.
There are two aspects to pursue, \emph{factorisation} and \emph{lifting}.
Factorisation refers to factorising a joint distribution into a set of local distributions exploiting (conditional) independences among the $n$ random variables for a compact encoding, allowing for a complexity of $O(n\cdot r^w)$ where $w$ refers to the so-called tree width, which basically denotes the largest number of arguments in an intermediate result during inference.
Lifting could then apply to that set of local distributions, further compactifying the encoding, which enables tractable inference in terms of domain sizes under certain conditions \cite{Tag13}.
For both aspects, the above mentioned (factorised) FO(PO)MDPs \cite{BouRePr01,SanBo07,SanKe10} as well as the work on lifted online decision \cite{GehBrMo19b,GehBrMo19d}, which uses lifted factorised models, are a jumping-off point.

Turning to the application, an inconspicuous assumption comes into focus, namely, that the set of agents $\boldsymbol{I}$ is known.
However, this assumption may not be true, possibly because the system designer cannot control how many agents arrive at a destination.
This fact is especially true in a nanoscale system where not only the exact number of agents is not known but also how many of those agents function correctly.
The usual practice of using an excessive number of agents that practically guarantees a minimum threshold of agents does not work with medical systems where too much medicine can be harmful.
Therefore, we need to keep partition sizes (and the overall number of agents) indefinite and possibly infer optimal sizes.
From a modelling standpoint, since the agent set is a discrete, bounded set, we can use a beta-binomial distribution with hyperparameters $\alpha$ and $\beta$ to model a probability distribution over possible set sizes for each partition.

The consequences for a given lifted DecPOMDP in terms of joint actions $\boldsymbol{A}$ and joint observations $\boldsymbol{O}$ lie in changed histograms.
The more interesting consequence arise in the transition model $T$ and sensor model $\Omega$ where the dimensions change.
The models currently do not have further structure, which makes this problem challenging.
Given a factorised representation with local distributions, inference in unknown universes \cite{BraMo19a} is a starting point.

\section{Conclusion}
\label{sec:concl}
This paper presents lifted DecPOMDPs, lifting the agent set, allowing for a reduction of the worst-case dependency of the model from exponential to polynomial in the number of agents.
Lifted DecPOMDPs work with a partitioning of the agent set where the agents of each partition are assumed to behave indistinguishably.
Therefore, we can use well-established lifting formalisms such as counting to reduce the length of joint actions and joint observations as well as the size of the transition and sensor models.
Lifted DecPOMDPs find their application in nanoscale systems, with the paper showcasing a medical diagnostics scenario.

Future work includes solving lifted DecPOMDPs, combining existing lifted solution approaches with the lifted representation, as well as recent advances in lifted online decision making.
From the nanosystem side, an important aspect lies in a more detailed model of a nanonetwork.
E.g., the stochastic behavior of the environment and the agents can be hard to model, making an analysis of network subclasses that can be compactly represented especially interesting.

\bibliography{bib/lifted_inf}

\begin{thebibliography}{29}
\providecommand{\natexlab}[1]{#1}
\providecommand{\url}[1]{\texttt{#1}}
\providecommand{\urlprefix}{URL }
\expandafter\ifx\csname urlstyle\endcsname\relax
  \providecommand{\doi}[1]{doi:\discretionary{}{}{}#1}\else
  \providecommand{\doi}{doi:\discretionary{}{}{}\begingroup
  \urlstyle{rm}\Url}\fi

\bibitem[{Ahmadi et~al.(2013)Ahmadi, Kersting, Mladenov, and
  Natarajan}]{AhmKeMlNa13}
Ahmadi, B.; Kersting, K.; Mladenov, M.; and Natarajan, S. 2013.
\newblock {Exploiting Symmetries for Scaling Loopy Belief Propagation and
  Relational Training}.
\newblock \emph{Machine Learning} 92(1): 91--132.

\bibitem[{Akyildiz, Brunetti, and Blázquez(2008)}]{akyildiz2008nanonetworks}
Akyildiz, I.~F.; Brunetti, F.; and Blázquez, C. 2008.
\newblock Nanonetworks: A new communication paradigm.
\newblock \emph{Computer Networks} 52(12): 2260--2279.

\bibitem[{Andersen et~al.(2009)Andersen, Dong, Nielsen, Jahn, Subramani,
  Mamdouh, Golas, Sander, Stark, Oliveira et~al.}]{andersen2009self}
Andersen, E.~S.; Dong, M.; Nielsen, M.~M.; Jahn, K.; Subramani, R.; Mamdouh,
  W.; Golas, M.~M.; Sander, B.; Stark, H.; Oliveira, C.~L.; et~al. 2009.
\newblock Self-assembly of a Nanoscale DNA Box with a Controllable Lid.
\newblock \emph{Nature} 459(7243): 73--76.

\bibitem[{Apsel and Brafman(2011)}]{ApsBr11}
Apsel, U.; and Brafman, R.~I. 2011.
\newblock {Extended Lifted Inference with Joint Formulas}.
\newblock In \emph{UAI-11 Proceedings of the 27th Conference on Uncertainty in
  Artificial Intelligence}, 74--83. AUAI Press.

\bibitem[{Boutilier, Reiter, and Price(2001)}]{BouRePr01}
Boutilier, C.; Reiter, R.; and Price, B. 2001.
\newblock {Symbolic Dynamic Programming for First-order MDPs}.
\newblock In \emph{IJCAI-01 Proceedings of the 17th International Joint
  Conference on Artificial Intelligence}, 690--697. IJCAI Organization.

\bibitem[{Braun and M\"oller(2018)}]{BraMo18b}
Braun, T.; and M\"oller, R. 2018.
\newblock {Parameterised Queries and Lifted Query Answering}.
\newblock In \emph{IJCAI-18 Proceedings of the 27th International Joint
  Conference on Artificial Intelligence}, 4980--4986. IJCAI Organization.

\bibitem[{Braun and M\"oller(2019)}]{BraMo19a}
Braun, T.; and M\"oller, R. 2019.
\newblock {Exploring Unknown Universes in Probabilistic Relational Models}.
\newblock In \emph{Proceedings of {AI} 2019: Advances in Artificial
  Intelligence}. Springer.

\bibitem[{Gehrke, Braun, and M\"oller(2019)}]{GehBrMo19d}
Gehrke, M.; Braun, T.; and M\"oller, R. 2019.
\newblock {Lifted Temporal Maximum Expected Utility}.
\newblock In \emph{Proceedings of the 32nd Canadian Conference on Artificial
  Intelligence, Canadian AI 2019}. Springer.

\bibitem[{Gehrke et~al.(2019)Gehrke, Braun, M\"oller, Waschkau, Strumann, and
  Steinh\"{a}user}]{GehBrMo19b}
Gehrke, M.; Braun, T.; M\"oller, R.; Waschkau, A.; Strumann, C.; and
  Steinh\"{a}user, J. 2019.
\newblock {Lifted Maximum Expected Utility}.
\newblock In \emph{Artificial Intelligence in Health}, 131--141. Springer.

\bibitem[{Holtzen, Millstein, and {Van den Broeck}(2019)}]{HolMiBr19}
Holtzen, S.; Millstein, T.; and {Van den Broeck}, G. 2019.
\newblock {Generating and Sampling Orbits for Lifted Probabilistic Inference}.
\newblock In \emph{UAI-19 Proceedings of the 35th Conference on Uncertainty in
  Artificial Intelligence}, 1--10. AUAI Press.

\bibitem[{Lau, Wendt, and Fischer(2021)}]{Lau21DNA}
Lau, F.; Wendt, R.; and Fischer, S. 2021.
\newblock DNA-Based Molecular Communication as a Paradigm for Multi-Parameter
  Detection of Diseases.
\newblock In \emph{4th ACM International Conference on Nanoscale Computing and
  Communication 2017 (ACM NanoCom'17)}. ACM.

\bibitem[{Lau et~al.(2019)Lau, Büther, Geyer, and Fischer}]{LAU2019}
Lau, F.-L.~A.; Büther, F.; Geyer, R.; and Fischer, S. 2019.
\newblock Computation of Decision Problems within Messages in DNA-tile-based
  Molecular Nanonetworks.
\newblock \emph{Nano Communication Networks} ISSN 1878-7789.

\bibitem[{McCarthy(1963)}]{McC63}
McCarthy, J. 1963.
\newblock {Situations, Actions, and Causal Laws}.
\newblock Technical report, {Standford University}.

\bibitem[{Milch et~al.(2008)Milch, Zettelmoyer, Kersting, Haimes, and
  Kaelbling}]{MilZeKeHaKa08}
Milch, B.; Zettelmoyer, L.~S.; Kersting, K.; Haimes, M.; and Kaelbling, L.~P.
  2008.
\newblock {Lifted Probabilistic Inference with Counting Formulas}.
\newblock In \emph{AAAI-08 Proceedings of the 23rd AAAI Conference on
  Artificial Intelligence}, 1062--1068. AAAI Press.

\bibitem[{Nath and Domingos(2009)}]{NatDo09}
Nath, A.; and Domingos, P. 2009.
\newblock A Language for Relational Decision Theory.
\newblock In \emph{Proceedings of the 6th International Workshop on Statistical
  Relational Learning}.

\bibitem[{Niepert and {Van den Broeck}(2014)}]{NieBr14}
Niepert, M.; and {Van den Broeck}, G. 2014.
\newblock {Tractability through Exchangeability: A New Perspective on Efficient
  Probabilistic Inference}.
\newblock In \emph{AAAI-14 Proceedings of the 28th AAAI Conference on
  Artificial Intelligence}, 2467--2475. AAAI Press.

\bibitem[{Oliehoek and Amato(2016)}]{OliAm16}
Oliehoek, F.~A.; and Amato, C. 2016.
\newblock \emph{{A Concise Introduction to Decentralised POMDPs}}.
\newblock Springer.

\bibitem[{Poole(1997)}]{Poo97}
Poole, D. 1997.
\newblock {The Independent Choice Logic for Modelling Multiple Agents under
  Uncertainty}.
\newblock \emph{Jounal of Artificial Intelligence} 94: 7--56.

\bibitem[{Poole(2003)}]{Poo03}
Poole, D. 2003.
\newblock {First-order Probabilistic Inference}.
\newblock In \emph{IJCAI-03 Proceedings of the 18th International Joint
  Conference on Artificial Intelligence}, 985--991. IJCAI Organization.

\bibitem[{Rothemund(2006)}]{Rothemund2006}
Rothemund, P. W.~K. 2006.
\newblock Folding DNA to Create Nanoscale Shapes and Patterns.
\newblock \emph{Nature} 440: 297--302.

\bibitem[{Russell and Norvig(2020)}]{RusNo20}
Russell, S.; and Norvig, P. 2020.
\newblock \emph{{Artificial Intelligence: A Modern Approach}}.
\newblock Pearson.

\bibitem[{Sanner and Boutilier(2007)}]{SanBo07}
Sanner, S.; and Boutilier, C. 2007.
\newblock {Approximate Solution Techniques for Factored First-order MDPs}.
\newblock In \emph{ICAPS-07 Proceedings of the 17th International Conference on
  Automated Planning and Scheduling}, 288--295. AAAI Press.

\bibitem[{Sanner and Boutilier(2009)}]{SanBo09}
Sanner, S.; and Boutilier, C. 2009.
\newblock {Practical Solution Techniques for First-order MDPs}.
\newblock \emph{Artificial Intelligence Journal} 173: 748--788.

\bibitem[{Sanner and Kersting(2010)}]{SanKe10}
Sanner, S.; and Kersting, K. 2010.
\newblock {Symbolic Dynamic Programming for First-order POMDPs}.
\newblock In \emph{AAAI-10 Proceedings of the 24th AAAI Conference on
  Artificial Intelligence}, 1140--1146. AAAI Press.

\bibitem[{Seeman(1982)}]{SEEMAN1982237}
Seeman, N.~C. 1982.
\newblock Nucleic Acid Junctions and Lattices.
\newblock \emph{Journal of Theoretical Biology} 99(2): 237 -- 247.
\newblock ISSN 0022-5193.

\bibitem[{Srivastava et~al.(2014)Srivastava, Russell, Ruan, and
  Cheng}]{SriRuRuCh14}
Srivastava, S.; Russell, S.; Ruan, P.; and Cheng, X. 2014.
\newblock {First-order Open-universe POMDPs}.
\newblock In \emph{UAI-14 Proceedings of the 30th Conference on Uncertainty in
  Artificial Intelligence}, 742--751. AUAI Press.

\bibitem[{Taghipour(2013)}]{Tag13}
Taghipour, N. 2013.
\newblock \emph{{Lifted Probabilistic Inference by Variable Elimination}}.
\newblock Ph.D. thesis, KU Leuven.

\bibitem[{{Van den Broeck} et~al.(2011){Van den Broeck}, Taghipour, Meert,
  Davis, and {De Raedt}}]{BroTaMeDaRa11}
{Van den Broeck}, G.; Taghipour, N.; Meert, W.; Davis, J.; and {De Raedt}, L.
  2011.
\newblock {Lifted Probabilistic Inference by First-order Knowledge
  Compilation}.
\newblock In \emph{IJCAI-11 Proceedings of the 22nd International Joint
  Conference on Artificial Intelligence}, 2178--2185. IJCAI Organization.

\bibitem[{Winfree et~al.(1998)Winfree, Liu, Wenzler, and
  Seeman}]{winfree1998design}
Winfree, E.; Liu, F.; Wenzler, L.~A.; and Seeman, N.~C. 1998.
\newblock Design and Self-assembly of Two-dimensional DNA Crystals.
\newblock \emph{Nature} 394(6693): 539.

\end{thebibliography}

\end{document}